\documentclass[twoside]{article}

\usepackage[preprint]{aistats2027}

\usepackage[round]{natbib}

\usepackage{amsmath,amssymb,amsfonts,amsthm,bm}
\usepackage{graphicx}
\usepackage{booktabs}
\usepackage{multirow}
\usepackage{url}
\usepackage{xcolor}
\usepackage{hyperref}

\definecolor{darkblue}{rgb}{0, 0, 0.5}
\hypersetup{colorlinks=true, citecolor=darkblue, linkcolor=darkblue, urlcolor=darkblue,
  pdftitle={Closing the Loop: Koopman Observers for Diffusion Acceleration}}

\newtheorem{proposition}{Proposition}

\begin{document}

%
\runningtitle{Koopman Observers for Diffusion Acceleration}

%

\twocolumn[

\aistatstitle{Koopman Observers for Diffusion Acceleration: Correcting Feature Forecasts with Shallow Measurements}

\aistatsauthor{Hanru Bai \And Yuanchao Xu \And Fengyi Li }

\aistatsaddress{ ETH Zurich \\ Max Planck Institute for Intelligent Systems \And Kyoto University \And MIT } ]

\begin{abstract}
Feature caching accelerates diffusion sampling by replacing expensive network evaluations with predictions from previously computed activations. However, forecasts based only on past features cannot directly incorporate changes in the current denoising state. We investigate whether inexpensive, freshly computed features can serve as observations for correcting these predictions. We introduce an observation-corrected Koopman framework for accelerating frozen diffusion models. Using calibration trajectories, we identify finite-dimensional, time-dependent Koopman approximations that jointly describe the increments of shallow and deep network features. During accelerated sampling, these operators predict the evolution of expensive deep features, while innovations in the observed shallow features correct the predicted state. Periodic full evaluations refresh the observer, and all generative-model parameters remain unchanged. This formulation enables controlled comparisons of temporal
prediction and observation correction.
Across three 10,000-image runs per dataset, our method reduces
paired Inception-feature MSE by $19.9\%$ on CIFAR-10 and
$11.9\%$ on a ten-class ImageNet subset relative to channelwise
affine prediction under the same four-partial-step schedule.
Matched ablations attribute additional reductions of
$4.54\%$ and $4.67\%$ to observation correction.
The observer achieves $1.89\times$ and $1.85\times$ measured
speedups over DDIM-50, supporting improved reference-sampler
fidelity without retraining the denoiser.
\end{abstract}

\section{Introduction}
\label{sec:introduction}

Diffusion models generate samples through repeated denoising network evaluations, making inference computationally expensive~\citep{ho2020denoising}. Feature caching reduces this cost by reusing intermediate activations, while forecasting methods predict their evolution between full evaluations~\citep{ma2024deepcache}. Approaches such as TaylorSeer~\citep{liu2025reusing} and FastCache~\citep{liu2025fastcache} demonstrate the potential of temporal extrapolation and learned linear prediction. However, improving the forecaster alone leaves an important question: how can predictions incorporate information about the current denoising state?

A forecast constructed exclusively from past activations cannot
directly incorporate changes in the current denoising state.
This matters because approximation errors alter subsequent network
inputs, potentially making historical trajectories less informative.
Nevertheless, a partial forward pass can provide inexpensive,
fresh shallow features even when deeper computation is skipped.
DeepCache~\citep{ma2024deepcache} already exploits this computational structure;
our question is whether these features can additionally serve as
\emph{measurements for correcting deep-feature forecasts}.
This differs from controlling forecast reliability:
RACER uses disagreement between forecasts to adjust trust and refresh decisions~\citep{li2026disagree}.
We instead investigate an explicit measurement innovation,
formed from the discrepancy between predicted and observed shallow features, to update the unobserved deep state. Figure~\ref{fig:intro_imagenet_examples} illustrates selected cases
in which observation correction improves agreement with full-model
generation.

\begin{figure}[!t]
    \centering
    \includegraphics[width=\linewidth]
    {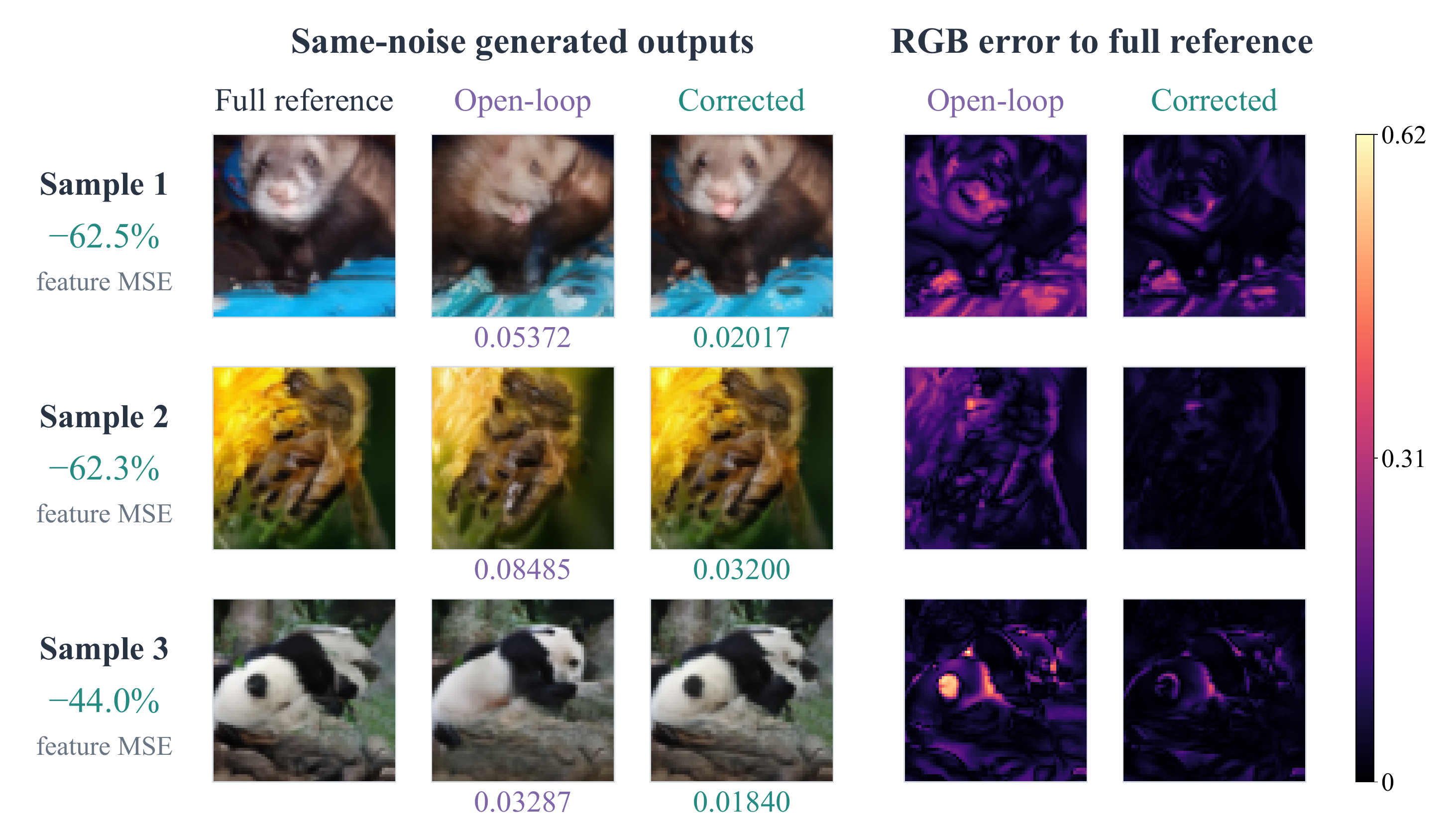}

    \caption{\textbf{Observation correction on selected ImageNet examples.}
    Each row compares full-model generation, open-loop prediction,
    and observation-corrected prediction using the same initial
    noise and class label. The accelerated variants share the
    full-evaluation schedule.
    Examples are selected for large pixel-MSE reductions among
    samples that also improve in Inception-feature MSE, with one
    example per class. They illustrate improvement cases, not
    average performance.
    Images are generated at $64\times64$ on a ten-class ImageNet
    subset. Full-model outputs are generated references, not
    ground-truth images.
    Heatmaps show per-pixel RGB RMSE on $[0,1]$-scaled images
    using a shared color scale. Numerical values and percentage
    reductions report Inception-feature MSE relative to the
    full-model reference.}
    \label{fig:intro_imagenet_examples}
\end{figure}

We introduce an \emph{observation-corrected Koopman framework} for accelerating frozen diffusion models.
Koopman methods describe nonlinear dynamics through the evolution
of observable functions, with finite-dimensional approximations
identified from data~\citep{williams2015data,brunton2021modern}.
Building on this perspective, we construct joint observables from
shallow- and deep-feature increments and their recent history.
The frozen network supplies nonlinear feature representations,
while time-dependent affine operators capture their temporal
evolution and cross-feature coupling.
These are calibrated approximations: we do not assume that the
chosen observables form an exact Koopman-invariant subspace.

Following the prediction--correction principle of state
observers~\citep{luenberger1966observers}, each accelerated step
first forecasts the joint feature increment.
A partial network evaluation then supplies the current shallow
observation.
A calibrated gain maps its innovation into a correction of the
deep-feature estimate, which is passed to the original decoder.
Periodic full evaluations refresh the observer.
This closes the prediction loop through measurements of the
current sampling state, without accessing the skipped deep
features.
All denoiser parameters remain frozen; only feature bases,
transition operators, and correction gains are fitted offline.
Our contributions are threefold:
\begin{itemize}
    \item \textbf{Joint Koopman modeling of diffusion features.}
    We formulate feature acceleration through time-dependent,
    finite-dimensional operators on joint shallow--deep
    increment observables.
    This representation makes temporal prediction and
    cross-feature coupling explicit, providing a common
    framework for distinguishing channelwise forecasting
    from coupled feature dynamics.

    \item \textbf{Measurement-driven correction without denoiser
    retraining.}
    We develop a causal sampling procedure that combines
    operator prediction, fresh shallow-feature innovations,
    and periodic full-evaluation refreshes.
    Its distinguishing mechanism is a calibrated update of
    unobserved deep features from observed prediction residuals.

    \item \textbf{Controlled evidence for the value of observation correction.}
We separate the effects of temporal prediction and feedback
using channelwise, open-loop, and observation-corrected
predictors under the same full-evaluation schedule.
Across three sampling runs with 10,000 images per dataset
per run, our observer reduces final-image Inception-feature
mean squared error by $19.9\%$ on CIFAR-10 and $11.9\%$
on a ten-class ImageNet subset relative to the channelwise
affine baseline at the four-partial-step budget.
Matched correction ablations show additional reductions
of $4.54\%$ and $4.67\%$ over disabled correction.
The evaluated observer implementations achieve
$1.89\times$ and $1.85\times$ measured speedups over
full 50-step DDIM sampling.
Stage-wise diagnostics and information-matched controls
further reveal when correction helps and characterize
the limitations of the observer parameterization.
\end{itemize}

\section{Related Work}
\label{sec:related_work}

\paragraph{Diffusion acceleration and feature forecasting.}
Diffusion models require repeated denoising
evaluations~\citep{ho2020denoising}.
DDIM and DPM-Solver reduce the number of sampling
evaluations~\citep{song2020denoising,lu2022dpm},
whereas feature caching reduces computation within each step.
DeepCache updates inexpensive shallow features while reusing
deep representations~\citep{ma2024deepcache}.
TaylorSeer, Spectrum, and L2P forecast features using temporal
extrapolation, spectral expansions, and learned historical
coefficients, respectively
~\citep{liu2025reusing,han2026adaptive,shen2026beyond}.
FastCache further incorporates learnable linear
approximations~\citep{liu2025fastcache}.
Our work investigates how freshly computed shallow
observations can correct deep-feature predictions.

\paragraph{Adaptive caching and feedback.}
HarmoniCa addresses trajectory dependencies and
training--inference mismatch in feature
caching~\citep{huang2024harmonica}.
RACER uses forecast disagreement to adjust trust and
reallocate full evaluations~\citep{li2026disagree}.
WorldDynCache combines a lifted latent surrogate with
risk-controlled exact transitions for diffusion world
models~\citep{chen2026worlddyncache}.
These methods establish adaptive control and
dynamics-inspired approximation as existing directions.
Our method computes a measurement innovation from the
discrepancy between predicted and observed shallow features.
This innovation directly updates the estimated deep state,
providing a distinct mechanism for incorporating feedback.

\paragraph{Koopman modeling and state estimation.}
Koopman methods describe nonlinear dynamics through
observable functions, with finite-dimensional approximations
identified from data
~\citep{williams2015data,brunton2021modern,bai2025konode,bai2026hierarchical}.
State observers combine predictions and measurements to
estimate unobserved variables~\citep{luenberger1966observers}.
We apply these principles to joint shallow--deep feature
increments using time-dependent affine operators and
calibrated correction gains.
Unlike WorldDynCache's local-memory surrogate, we explicitly
identify feature-transition matrices.
We assume neither an exact invariant observable subspace
nor guaranteed observer convergence.
Since an unconstrained affine observer can also be expressed
as direct regression on the same inputs, its practical value
must be assessed against same-information regression controls.

\section{Preliminaries and Problem Formulation}
\label{sec:preliminaries}

\subsection{Diffusion Sampling}

Let $\epsilon_\theta$ denote a frozen denoising network
and $\{t_k\}_{k=0}^{N-1}$ a decreasing diffusion-time grid.
The index $k$ follows the order of sampling.
For deterministic sampling, including DDIM with
$\eta=0$~\citep{song2020denoising}, the update is
\begin{equation}
    x_{k+1}
    =
    \Phi_k\bigl(x_k,\epsilon_\theta(x_k,t_k,c)\bigr),
    \label{eq:full_sampling}
\end{equation}
where $c$ denotes conditioning and $\Phi_k$ is the
prescribed sampler update.
Starting from Gaussian noise $x_0$, full sampling
evaluates the entire denoiser at every step.
Our objective is to reduce network computation while
keeping the sampling grid and denoiser parameters fixed.
For notational simplicity, guidance is omitted.
When used, feature estimation is applied separately
to the conditional and unconditional branches.

\subsection{Partial Evaluation and Feature Estimation}

We partition the denoiser into a shallow computation,
an expensive deep computation, and a final readout.
At the current accelerated sampling state $\tilde{x}_k$,
define
\begin{equation}
    \begin{aligned}
        s_k &= S_\theta(\tilde{x}_k,t_k,c), \\
        d_k &= D_\theta(\tilde{x}_k,t_k,c).
    \end{aligned}
    \label{eq:feature_definitions}
\end{equation}
Here $s_k$ contains the shallow features required by
the retained skip connections, and $d_k$ is the deep
representation entering the final decoder.
A full evaluation produces
$R_\theta(d_k,s_k,t_k,c)$.
The functions $S_\theta$, $D_\theta$, and $R_\theta$
are components of the original frozen network.

Let $\mathcal{F}$ denote the full-evaluation indices.
Shallow features are computed at every step, while
deep features are evaluated only when $k\in\mathcal{F}$.
Otherwise, a causal estimator supplies
\begin{equation}
    \hat{d}_k
    =
    \begin{cases}
        d_k, & k\in\mathcal{F}, \\
        G_k(\mathcal{H}_{k-1},s_k), & k\notin\mathcal{F},
    \end{cases}
    \label{eq:feature_estimation}
\end{equation}
where $\mathcal{H}_{k-1}$ contains previously computed
observations, cached features, and estimated states.
The estimator cannot access the current deep feature
at a skipped evaluation.
The accelerated update becomes
\begin{equation}
    \tilde{x}_{k+1}
    =
    \Phi_k\bigl(
        \tilde{x}_k,
        R_\theta(\hat{d}_k,s_k,t_k,c)
    \bigr).
    \label{eq:accelerated_sampling}
\end{equation}
Importantly, $d_k$ is defined at $\tilde{x}_k$.
It can differ from the deep feature along the
full-sampling reference trajectory because earlier
approximations change subsequent inputs.

\subsection{Finite-Dimensional Koopman Approximation}

For dynamics $q_{k+1}=F_k(q_k)$, the time-dependent
Koopman operator acts on an observable function $g$ as
\begin{equation}
    (\mathcal{K}_k g)(q)
    =
    g(F_k(q)).
    \label{eq:koopman_operator}
\end{equation}
This operator is linear in $g$, even when $F_k$
is nonlinear~\citep{brunton2021modern, 10.1063/5.0283640}.
The state $q_k$ may include the sample, conditioning,
time, and history required to define the observables.

Given a finite observable dictionary
$v_k=\psi(q_k)$, data-driven identification yields
an affine approximation
\begin{equation}
    v_{k+1}
    =
    A_k v_k+b_k+r_k,
    \label{eq:finite_koopman}
\end{equation}
where $r_k$ captures finite-dictionary closure and
identification errors~\citep{williams2015data}.
We do not assume that this residual vanishes.

In the next section, we construct $v_k$ from joint
shallow--deep feature increments and use current
shallow observations to correct its predicted evolution.
The observable bases, transition matrices, and correction
gains are calibrated offline without updating $\theta$.
We evaluate approximation fidelity and sampling cost
separately from distributional generation quality.


\section{Method}
\label{sec:method}

We construct a feature observer with three operations: joint increment
prediction, shallow-observation correction, and periodic full refreshes.
Figure~\ref{fig:koopman_observer_combined} (left) summarizes the partial
evaluation path, the observer update, and offline calibration.
All denoiser parameters remain frozen.

\begin{figure*}[!t]
    \centering
    \begin{minipage}[t]{0.49\linewidth}
        \vspace{0pt}
        \centering
        \includegraphics[width=\linewidth]
        {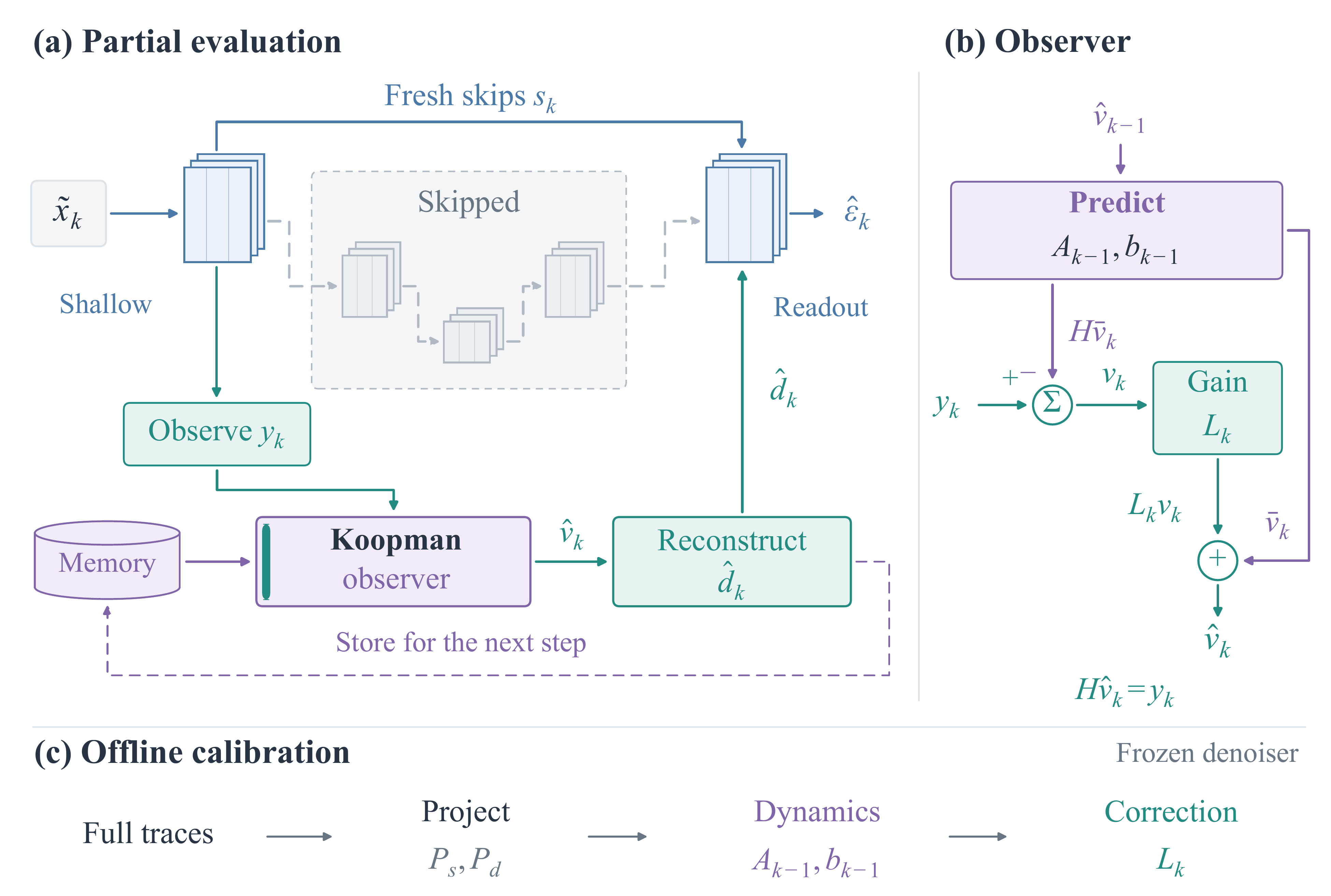}
    \end{minipage}\hfill
    \begin{minipage}[t]{0.49\linewidth}
        \vspace{0pt}
        \centering
        \includegraphics[width=\linewidth]
        {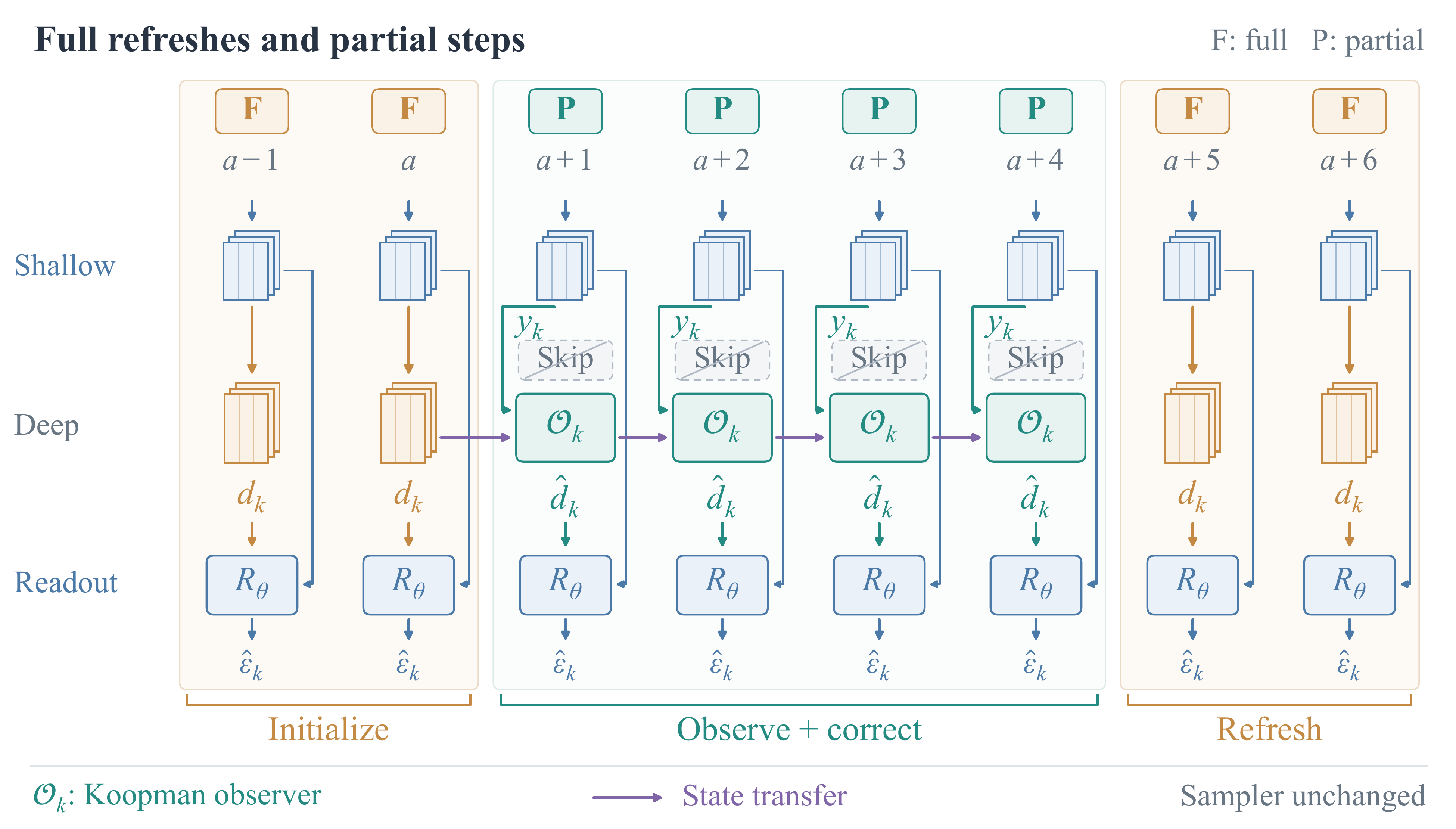}
    \end{minipage}
    \caption{\textbf{Koopman observers for diffusion acceleration.}
    \textbf{Left:} Fresh shallow observations correct the predicted joint
    increment state. The reconstructed deep feature and current shallow
    skips feed the retained readout, while expensive deep blocks are
    skipped. Projection bases, scales, time-dependent operators, and
    correction gains are identified from offline calibration trajectories.
    \textbf{Right:} Two full evaluations (F) initialize the observer state.
    Partial evaluations (P) use the Koopman observer $\mathcal{O}_k$ to
    predict, correct, and reconstruct deep features. Subsequent full
    evaluations refresh the estimates at the current accelerated samples.
    The schedule shows an illustrative interior segment.
    All denoiser weights remain frozen and the sampler is unchanged.}
    \label{fig:koopman_observer_combined}
\end{figure*}

\subsection{Joint Increment Observables}
\label{sec:method_observables}

We describe the channel vectors at one spatial location and suppress the
conditioning branch. Features are spatially aligned, and the same operators
are applied at every location. Let $P_d\in\mathbb{R}^{C_d\times r}$ and
$P_s\in\mathbb{R}^{C_s\times r}$ be orthonormal channel bases obtained from
the second moments of calibration feature increments. With positive
calibration scales $\sigma_d$ and $\sigma_s$, define
\begin{equation}
    u_k=\frac{P_d^\top(d_k-d_{k-1})}{\sigma_d},
    \;
    y_k=\frac{P_s^\top(s_k-s_{k-1})}{\sigma_s},
    \;
v_k=\begin{bmatrix}u_k\\y_k\end{bmatrix}.
    \label{eq:joint_increments}
\end{equation}
The scales are the root mean square values of the projected increments.
Let $J=[I_r\ \ 0]$ and $H=[0\ \ I_r]$ select the deep and shallow components.
The shallow increment $y_k$ remains observable at partial evaluations.
Measuring the deep increment $u_k$ requires consecutive full evaluations.
These increments are nonlinear observables of the sample and its preceding
state because the underlying features come from the frozen network.

\subsection{Offline Operator and Gain Identification}
\label{sec:method_identification}

Following Figure~\ref{fig:koopman_observer_combined} (left, panel~c), we collect
full-sampling trajectories and split complete trajectories into
calibration, validation, and held-out diagnostic sets. Patches and
conditioning branches from one trajectory remain in the same split.
For each transition, affine
EDMD-style regression~\citep{williams2015data} gives
\begin{equation}
    \begin{aligned}
    (A_{k-1},b_{k-1})=\arg\min_{A,b}\quad&
    \frac{1}{M_k}\sum_{j=1}^{M_k}
    \left\|v_k^{(j)}-Av_{k-1}^{(j)}-b\right\|_2^2\\
    &+\lambda_k\left(\|A-I_{2r}\|_F^2+\|b\|_2^2\right),
    \end{aligned}
    \label{eq:operator_identification}
\end{equation}
where $j$ indexes calibration feature vectors. The identity prior favors
persistence of feature increments. The indexing follows the transition
$v_{k-1}\mapsto v_k$ defined in the preliminaries.

Let $\rho_k^{(j)}=v_k^{(j)}-A_{k-1}v_{k-1}^{(j)}-b_{k-1}$ be the fitted
residual. We learn a shallow-to-deep residual gain through
\begin{equation}
    \begin{aligned}
    L_k^d&=\arg\min_L\
    \frac{1}{M_k}\sum_{j=1}^{M_k}
    \left\|J\rho_k^{(j)}-LH\rho_k^{(j)}\right\|_2^2
    +\gamma_k\|L\|_F^2,\\
    L_k&=\begin{bmatrix}L_k^d\\I_r\end{bmatrix}.
    \end{aligned}
    \label{eq:gain_identification}
\end{equation}
The effective ridge penalties include input second-moment scaling.
Bases, scales, operators, and gains use calibration data only.
Rank and regularization are selected on validation trajectories before
evaluation on independent sampling cohorts.

\subsection{Observation-Corrected Prediction}
\label{sec:method_correction}

As illustrated in Figure~\ref{fig:koopman_observer_combined}
(left, panels~a and~b),
at a partial evaluation, we first propagate the estimated increment state
and compute its shallow innovation
\begin{equation}
    \bar v_k=A_{k-1}\hat v_{k-1}+b_{k-1},
    \qquad
    \nu_k=y_k-H\bar v_k.
    \label{eq:observer_prediction}
\end{equation}
The prediction--correction update follows the state-observer
principle~\citep{luenberger1966observers}
\begin{equation}
    \hat v_k=\bar v_k+L_k\nu_k,
    \qquad
    \hat d_k=\hat d_{k-1}+\sigma_d P_d J\hat v_k.
    \label{eq:observer_correction}
\end{equation}
Since $HL_k=I_r$, the corrected shallow component satisfies
$H\hat v_k=y_k$. Its deep component is updated through the calibrated
cross-feature residual relation. The original decoder receives
$\hat d_k$ and the current shallow skips $s_k$, followed by the unchanged
sampler update. No current deep feature is computed during this operation.
When $r<C_d$, only the retained deep-increment subspace is updated between
refreshes.

\subsection{Refreshes and Computational Cost}
\label{sec:method_refresh}

Figure~\ref{fig:koopman_observer_combined} (right) illustrates the alternation
between full evaluations and observation-corrected partial steps.
Each prediction episode begins with two consecutive full evaluations at
indices $a-1$ and $a$. These initialize $\hat d_a=d_a$ and $\hat v_a=v_a$
from measured feature differences. Subsequent partial steps recursively
apply Eqs.~\eqref{eq:observer_prediction}--\eqref{eq:observer_correction}.
The next full-evaluation pair resets the feature and increment estimates
at the current accelerated states. It does not reset the sample to the
full-sampling reference trajectory.

For $m$ full evaluations among $N$ sampling steps, the approximate online
cost is
\begin{equation}
    T_{\mathrm{online}}
    \approx mT_{\mathrm{full}}+(N-m)T_{\mathrm{partial}},
    \label{eq:observer_cost}
\end{equation}
where $T_{\mathrm{partial}}$ includes shallow computation, projections,
prediction, correction, and the retained decoder and sampler operations.
Actual latency is measured end to end, with offline calibration reported
separately. Matching full-evaluation counts alone does not match runtime.

\subsection{Error Propagation}
\label{sec:method_error}

Let $e_k=v_k-\hat v_k$ be the estimation error and
$\rho_k=v_k-A_{k-1}v_{k-1}-b_{k-1}$ the residual of the identified
dynamics along the accelerated trajectory. Subtracting
Eqs.~\eqref{eq:observer_prediction}--\eqref{eq:observer_correction}
from Eq.~\eqref{eq:finite_koopman} and using $y_k=Hv_k$ gives
\begin{equation}
    e_k=(I_{2r}-L_kH)
    \left(A_{k-1}e_{k-1}+\rho_k\right).
    \label{eq:observer_error_identity}
\end{equation}
Because $HL_k=I_r$, the shallow component of $e_k$ vanishes,
$e_k=(e_k^u,0)$. Writing $A_{k-1}$ in blocks according to the deep and
the shallow components,
$A_{k-1}=\bigl[\begin{smallmatrix}A^{uu}_{k-1}&A^{uy}_{k-1}\\
A^{yu}_{k-1}&A^{yy}_{k-1}\end{smallmatrix}\bigr]$, the deep error
therefore obeys
\begin{equation}
    e^u_k
    =
    \bigl(A^{uu}_{k-1}-L^d_kA^{yu}_{k-1}\bigr)e^u_{k-1}
    +\bigl(\rho^u_k-L^d_k\rho^y_k\bigr).
    \label{eq:deep_error_recursion}
\end{equation}
This is the error recursion of a Luenberger observer with transition
matrix $A^{uu}_{k-1}$, output matrix $A^{yu}_{k-1}$ and gain $L^d_k$:
the measured shallow increment carries information about the
propagated deep error only through the coupling block $A^{yu}_{k-1}$.
For block-diagonal dynamics, and in particular for channelwise models,
$A^{yu}_{k-1}=0$ and the correction acts on the residual alone. We make
no claim of contraction; the closed-loop factor
$A^{uu}_{k-1}-L^d_kA^{yu}_{k-1}$ is available offline and can be
compared with $A^{uu}_{k-1}$.

The gain~\eqref{eq:gain_identification} is the ridge regression of the
deep residual on the shallow residual. Its effect on a single step is
described by the following statement, whose proof is a direct
computation (Appendix~\ref{app:proof}).

\begin{table*}[t]
    \centering
    \footnotesize
    \setlength{\tabcolsep}{4pt}
    \caption{
        Generation quality and full-sampler fidelity at the
        middle latency budget.
        Each result uses 10,000 images per sampling run and
        three runs.
        FID reports mean and sample SD.
        $E_{\phi}$ is mean paired Inception-feature MSE
        relative to full DDIM-50.
        All seven baseline families are within 3.81\% of
        the observer latency.
        $\dagger$ denotes an adaptation to the native U-Net.
    }
    \label{tab:main_results}
    \begin{tabular}{lrrr rrr}
        \toprule
        & \multicolumn{3}{c}{CIFAR-10}
        & \multicolumn{3}{c}{ImageNet-10} \\
        \cmidrule(lr){2-4}
        \cmidrule(lr){5-7}
        Method
        & FID$\downarrow$
        & $10^3E_{\phi}\downarrow$
        & Speedup$\uparrow$
        & FID$\downarrow$
        & $10^3E_{\phi}\downarrow$
        & Speedup$\uparrow$ \\
        \midrule
        Full DDIM-50
        & $13.39\pm0.05$ & 0.000 & $1.00\times$
        & $13.72\pm0.08$ & 0.000 & $1.00\times$ \\

        Reduced-step DDIM
        & $15.43\pm0.10$ & 10.419 & $1.79\times$
        & $14.55\pm0.10$ & 21.492 & $1.67\times$ \\

        DPM-Solver++ 2M
        & $16.84\pm0.10$ & 28.282 & $1.79\times$
        & $14.76\pm0.09$ & 38.720 & $1.67\times$ \\

        Feature reuse
        & $15.90\pm0.09$ & 15.985 & $1.71\times$
        & $15.95\pm0.04$ & 37.214 & $1.69\times$ \\

        Linear extrapolation
        & $13.53\pm0.07$ & 2.982 & $1.71\times$
        & $14.08\pm0.09$ & 14.924 & $1.69\times$ \\

        Channelwise affine
        & $13.59\pm0.07$ & 2.795 & $1.71\times$
        & $14.13\pm0.09$ & 14.330 & $1.68\times$ \\

        DeepCache$^{\dagger}$
        & $14.23\pm0.08$ & 5.085 & $1.67\times$
        & $14.63\pm0.09$ & 22.843 & $1.65\times$ \\

        Spectrum$^{\dagger}$
        & $15.24\pm0.09$ & 12.146 & $1.69\times$
        & $14.93\pm0.13$ & 30.712 & $1.64\times$ \\

        \midrule
        Observer ($s=3$)
        & $13.51\pm0.08$ & 1.776 & $1.73\times$
        & $14.13\pm0.10$ & 9.741 & $1.70\times$ \\
        \bottomrule
    \end{tabular}
\end{table*}

\begin{proposition}
\label{prop:onestep}
Let $\epsilon=(\epsilon^u,\epsilon^y)$ be a random vector with finite
second moments $\Sigma_{uu}=\mathbb{E}[\epsilon^u\epsilon^{u\top}]$,
$\Sigma_{uy}=\mathbb{E}[\epsilon^u\epsilon^{y\top}]=\Sigma_{yu}^\top$
and $\Sigma_{yy}=\mathbb{E}[\epsilon^y\epsilon^{y\top}]$, let
$\gamma\ge0$ be such that $\Sigma_{yy}+\gamma I$ is invertible, and let
$L=\Sigma_{uy}M$ with $M=(\Sigma_{yy}+\gamma I)^{-1}$. Then
\begin{align*}
    &\mathbb{E}\|\epsilon^u\|^2-\mathbb{E}\|\epsilon^u-L\epsilon^y\|^2\\
    &\qquad=
    \operatorname{tr}\bigl[\Sigma_{uy}M(\Sigma_{yy}+2\gamma I)M\Sigma_{yu}\bigr]
    \ \ge\ 0.
\end{align*}
For $\gamma=0$, $L$ minimizes $\mathbb{E}\|\epsilon^u-L'\epsilon^y\|^2$
over all matrices $L'$, and the second moment of $\epsilon^u-L\epsilon^y$
is the Schur complement $\Sigma_{uu}-\Sigma_{uy}\Sigma_{yy}^{-1}\Sigma_{yu}$.
\end{proposition}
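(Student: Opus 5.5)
The plan is to expand the quadratic $\mathbb{E}\|\epsilon^u-L\epsilon^y\|^2$ with the trace identity $\mathbb{E}\|z\|^2=\operatorname{tr}\,\mathbb{E}[zz^\top]$ and then substitute $L=\Sigma_{uy}M$. Finite second moments make every expectation below well defined, and linearity of trace and expectation gives
\begin{equation*}
\mathbb{E}\|\epsilon^u-L\epsilon^y\|^2=\operatorname{tr}\Sigma_{uu}-2\operatorname{tr}\bigl(L\Sigma_{yu}\bigr)+\operatorname{tr}\bigl(L\Sigma_{yy}L^\top\bigr).
\end{equation*}
So the difference in the statement equals $2\operatorname{tr}(L\Sigma_{yu})-\operatorname{tr}(L\Sigma_{yy}L^\top)$.

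Next, substitute $L=\Sigma_{uy}M$. Since $\Sigma_{yy}$ is symmetric, $M$ is symmetric and $L^\top=M\Sigma_{yu}$. The key step is to write the cross term so it shares the sandwich form of the quadratic term. Using $M^{-1}=\Sigma_{yy}+\gamma I$, we have $\operatorname{tr}(\Sigma_{uy}M\Sigma_{yu})=\operatorname{tr}\bigl(\Sigma_{uy}M(\Sigma_{yy}+\gamma I)M\Sigma_{yu}\bigr)$. Doubling this and subtracting $\operatorname{tr}(\Sigma_{uy}M\Sigma_{yy}M\Sigma_{yu})$ gives exactly $\operatorname{tr}\bigl[\Sigma_{uy}M(\Sigma_{yy}+2\gamma I)M\Sigma_{yu}\bigr]$.

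For nonnegativity, note that $\Sigma_{yy}\succeq0$ as a second-moment matrix and $\gamma\ge0$, so $\Sigma_{yy}+2\gamma I\succeq0$. Set $B=M\Sigma_{yu}$, so that $B^\top=\Sigma_{uy}M$. The quantity is then $\operatorname{tr}(B^\top(\Sigma_{yy}+2\gamma I)B)$. This is a sum of quadratic forms $b_i^\top(\Sigma_{yy}+2\gamma I)b_i\ge0$, where $b_i$ are the columns of $B$.

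For $\gamma=0$, the map $L'\mapsto\mathbb{E}\|\epsilon^u-L'\epsilon^y\|^2$ is a convex quadratic. I would complete the square:
\begin{equation*}
\mathbb{E}\|\epsilon^u-L'\epsilon^y\|^2=\operatorname{tr}\bigl(\Sigma_{uu}-\Sigma_{uy}\Sigma_{yy}^{-1}\Sigma_{yu}\bigr)+\operatorname{tr}\bigl((L'-L)\Sigma_{yy}(L'-L)^\top\bigr).
\end{equation*}
This identity is verified by expanding and using $L\Sigma_{yy}=\Sigma_{uy}$. The second term is nonnegative and vanishes at $L'=L$, which gives minimality.

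Finally, the matrix second moment is $\mathbb{E}[(\epsilon^u-L\epsilon^y)(\epsilon^u-L\epsilon^y)^\top]=\Sigma_{uu}-L\Sigma_{yu}-\Sigma_{uy}L^\top+L\Sigma_{yy}L^\top$. With $L=\Sigma_{uy}\Sigma_{yy}^{-1}$, each of the last three terms equals $\pm\Sigma_{uy}\Sigma_{yy}^{-1}\Sigma_{yu}$, which leaves the Schur complement $\Sigma_{uu}-\Sigma_{uy}\Sigma_{yy}^{-1}\Sigma_{yu}$.

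The only delicate step is rewriting the cross term via $M^{-1}=\Sigma_{yy}+\gamma I$ so that it combines with the quadratic term. Everything else is routine expansion. Invertibility of $\Sigma_{yy}$ when $\gamma=0$ is given by the hypothesis.
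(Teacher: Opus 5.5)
Your proof is correct and follows the paper's argument. You use the same trace expansion of $\mathbb{E}\|\epsilon^u-L'\epsilon^y\|^2$ and the same identity $2M-M\Sigma_{yy}M=M(\Sigma_{yy}+2\gamma I)M$, which you get by inserting $M=M(\Sigma_{yy}+\gamma I)M$ into the cross term, followed by the same quadratic-form argument for nonnegativity. The only variation is at $\gamma=0$: you complete the square instead of setting the gradient $-2\Sigma_{uy}+2L'\Sigma_{yy}$ to zero, which gives global minimality and the excess error $\operatorname{tr}\bigl((L'-L)\Sigma_{yy}(L'-L)^\top\bigr)$ directly, without a separate appeal to convexity.
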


Applied with $\epsilon$ distributed as the calibration residuals, the
proposition states that the correction with the
gain~\eqref{eq:gain_identification} does not increase the mean squared
deep error at the step that follows a reinitialization, where $e_a=0$
and the prior error equals $\rho_{a+1}$. At the subsequent steps the
prior error is $A_{k-1}e_{k-1}+\rho_k$, and the statement holds to the
extent that its second moments agree with those of the calibration
residuals. The ratio
$\operatorname{tr}(\Sigma_{uy}\Sigma_{yy}^{-1}\Sigma_{yu})/\operatorname{tr}\Sigma_{uu}$
is the fraction of the deep residual that is linearly predictable from
the shallow residual at step $k$; it measures how observable the deep
increment is through the shallow one.

Finally, the error of the deep feature passed to the decoder follows
from Eq.~\eqref{eq:observer_correction}. Since $P_d^\top P_d=I_r$ and
$\sigma_dP_dJv_k=P_dP_d^\top(d_k-d_{k-1})$, and since $\hat d_a=d_a$,
one obtains for $k>a$
\begin{equation}
    d_k-\hat d_k
    =
    \sigma_dP_d\sum_{i=a+1}^{k}e^u_i
    +(I-P_dP_d^\top)(d_k-d_a).
    \label{eq:deep_feature_accumulation}
\end{equation}
The first term accumulates the increment errors of the episode; the
second is the drift of the deep feature outside the retained subspace
since the last reinitialization, and it vanishes for $r=C_d$. The
number of accumulated terms grows with the length of the episode,
which is why reinitializations are kept frequent and why the
prediction error is examined as a function of the horizon
(Appendix~\ref{app:feature_prediction}).

Eqs.~\eqref{eq:observer_prediction}--\eqref{eq:observer_correction}
also show that
$\hat v_k=(I_{2r}-L_kH)(A_{k-1}\hat v_{k-1}+b_{k-1})+L_ky_k$ is an
affine map of $(\hat v_{k-1},y_k)$ whose coefficients are constrained
by the factorization through $A_{k-1}$ and $L_k$ and by the identity
prior; an unconstrained regression of the deep increment on the same
inputs is one of the controls in the experiments.

\section{Experiments}
\label{sec:experiments}

We evaluate whether Koopman observers accelerate frozen diffusion
sampling while preserving the outputs of the full sampler.
We distinguish fidelity to this sampler from distributional
generation quality, then test whether current observations
contribute beyond temporal prediction alone.

\subsection{Experimental Setup}
\label{sec:experimental_setup}

\paragraph{Models and evaluation.}
We use two frozen class-conditional U-Net checkpoints with raw
weights, on CIFAR-10 at $32\times32$ and a ten-class ImageNet
subset at $64\times64$.
The reference is deterministic DDIM-50 with classifier-free
guidance scale 2.0.
Our observer retains the same 50-step grid and uses offline
calibration to identify projection bases, dynamics, and
correction gains.
The main predictor uses rank 256 for both shallow and deep
increments.
No denoiser parameters are updated. All implementations were run on NVIDIA A100 GPUs.

Each configuration generates 10,000 images for each of three
sampling runs, with 1,000 images per class per run.
Methods share initial noise and class labels within each run.
We report FID and KID against real images, and paired
Inception-feature MSE $E_{\phi}$ against the corresponding
full-sampler outputs.
Smaller $E_{\phi}$ indicates closer agreement with the full
sampler, not necessarily better generation quality.
Summary statistics are means and sample standard deviations
across sampling runs, not independently trained models.
Real-data references, pixel errors, and calibration splits are
detailed in Appendix~\ref{app:experiments}.

\paragraph{Timing and baselines.}
We measure FP32 sampling using batch sizes of
50 for CIFAR-10 and 20 for ImageNet-10.
Latencies are medians of three synchronized repetitions after
warmup, with randomized configuration order.
Timing includes both guidance branches, feature prediction,
correction, and sampler updates, but excludes noise creation,
I/O, and metric computation.

Baselines include reduced-step DDIM~\citep{song2020denoising},
DPM-Solver++~\citep{lu2025dpmpp}, DeepCache~\citep{ma2024deepcache}, and
Spectrum~\citep{han2026adaptive}, together with feature reuse,
linear extrapolation, and channelwise affine prediction.
DeepCache and Spectrum are adaptations to our native U-Net
and feature locations.
For each observer budget, baseline configurations are selected
solely by independently measured latency from a fixed
candidate set.
We call a comparison approximately latency matched only when
the absolute latency gap is at most 5\%.

\subsection{Efficiency, Fidelity, and Generation Quality}
\label{sec:efficiency_fidelity}

We evaluate maximum partial-step runs of
$s\in\{2,3,4\}$, corresponding to 28, 23, and 20 full
steps on the 50-step grid.
Each full or partial step includes both guidance branches.
Table~\ref{tab:main_results} presents the middle budget,
where all seven baseline families meet the latency tolerance
on both datasets.
Figure~\ref{fig:speed_quality} shows all three budgets.

At the middle budget, the observer achieves $1.73\times$
acceleration on CIFAR-10 and $1.70\times$ on ImageNet-10.
Its FID is $13.515\pm0.079$ and $14.133\pm0.097$,
compared with $13.393\pm0.049$ and $13.724\pm0.083$
for full DDIM-50.
The corresponding increases in FID are 0.122 and 0.409.
At approximately matched latency, the observer reduces
$E_{\phi}$ by 40.5\% and 34.7\% relative to linear
extrapolation, and by 36.5\% and 32.0\% relative to
channelwise affine prediction.
Its mean FID is also lower than those of reduced-step DDIM,
DPM-Solver++, feature reuse, and the adapted DeepCache
and Spectrum configurations at this budget.

\begin{figure}[t]
    \centering
    \includegraphics[width=\linewidth]
        {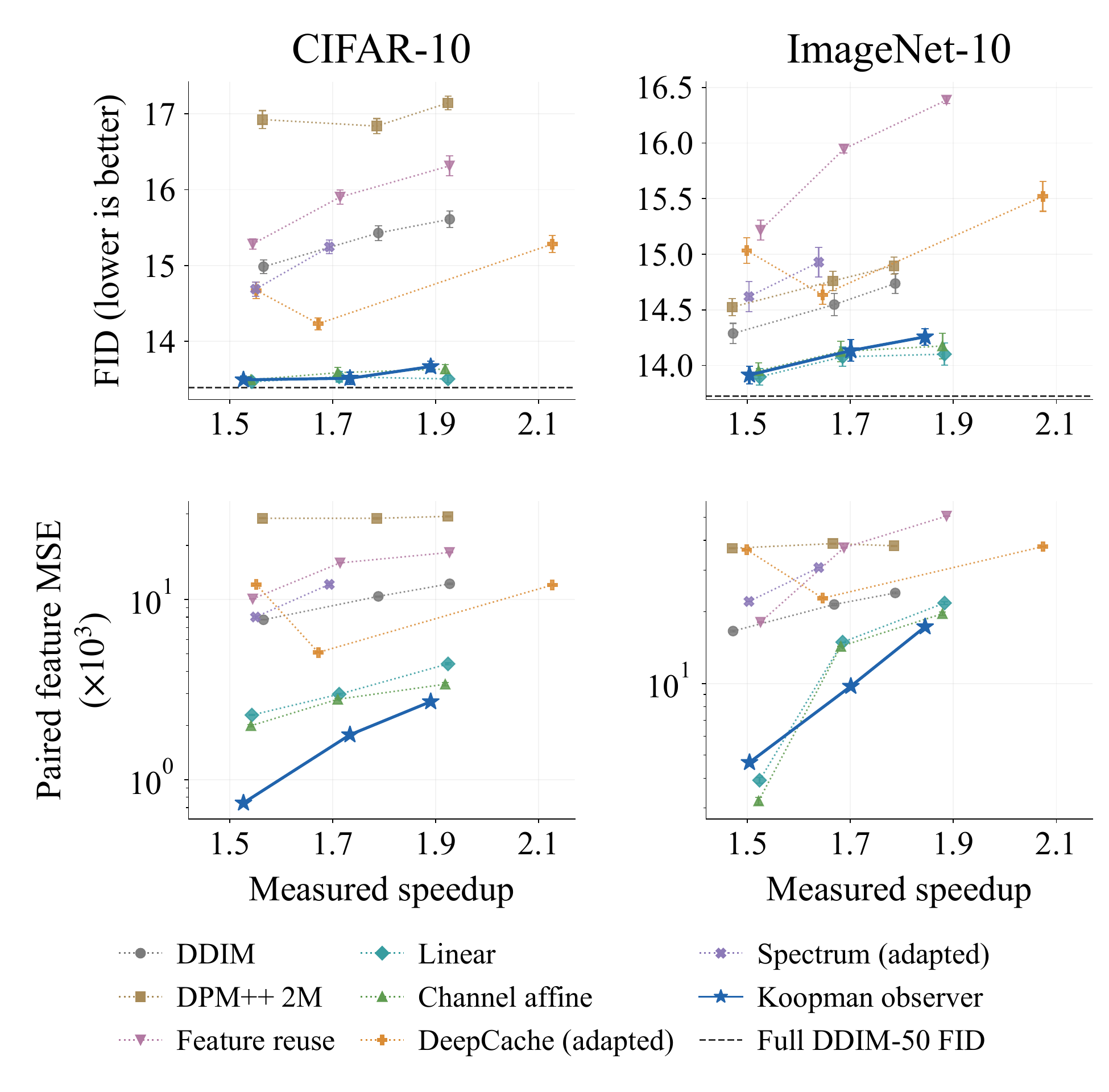}
    \caption{
        \textbf{Distributional quality and reference fidelity
        are different objectives.}
        FID (top) and paired Inception-feature MSE (bottom)
        against measured speedup on CIFAR-10 (left) and
        ImageNet-10 (right).
        Points show means over three sampling runs, with
        error bars indicating sample SD.
        The bottom row uses a logarithmic vertical axis.
        Baselines are the configurations selected by timing
        for the three observer budgets, with repeated
        selections shown once.
        Every point is plotted at its actual measured
        speedup, including configurations outside the
        5\% matching tolerance.
        Lines connect evaluated configurations as visual
        guides.
        Dashed horizontal lines show full DDIM-50 FID.
    }
    \label{fig:speed_quality}
\end{figure}



\subsection{Do Current Observations Improve Prediction?}
\label{sec:observation_ablation}

To isolate measurement correction, we fix the rank-256
predictor, projection bases, dynamics, gains, and $s=4$
refresh positions.
We compare current shallow increments, increments delayed
by one sampling step, and correction disabled.
All three variants use the same unfused implementation
and recompute the current shallow skip features supplied
to the decoder.
Only the correction input or its application changes.

\begin{table*}[t]
    \centering
    \footnotesize
    \setlength{\tabcolsep}{3pt}
    \caption{
        Effect of current observations with a fixed fitted
        predictor and the same $s=4$ schedule.
        All three variants use the unfused implementation
        and fresh shallow decoder skips.
        $E_x$ is pixel MSE on floating-point images
        in $[-1,1]$.
        Errors and FID report mean and sample SD across
        three sampling runs.
        Latency is the median of three synchronized
        repetitions.
    }
    \label{tab:observation_ablation}
    \begin{tabular}{llrrrr}
        \toprule
        Dataset
        & Correction input
        & FID$\downarrow$
        & $10^3E_{\phi}\downarrow$
        & $10^3E_x\downarrow$
        & s/batch \\
        \midrule
        CIFAR-10
        & Disabled
        & $13.638\pm0.070$
        & $2.842\pm0.050$
        & $0.244\pm0.003$
        & 3.757 \\

        CIFAR-10
        & One-step delayed
        & $13.807\pm0.067$
        & $3.110\pm0.027$
        & $0.308\pm0.003$
        & 3.787 \\

        CIFAR-10
        & Current
        & $13.669\pm0.071$
        & $2.713\pm0.047$
        & $0.224\pm0.005$
        & 3.781 \\

        \midrule
        ImageNet-10
        & Disabled
        & $14.266\pm0.107$
        & $18.172\pm0.230$
        & $4.277\pm0.052$
        & 5.310 \\

        ImageNet-10
        & One-step delayed
        & $14.335\pm0.021$
        & $20.984\pm0.124$
        & $5.086\pm0.009$
        & 5.357 \\

        ImageNet-10
        & Current
        & $14.257\pm0.073$
        & $17.323\pm0.235$
        & $3.900\pm0.045$
        & 5.344 \\
        \bottomrule
    \end{tabular}
\end{table*}

Current observations reduce $E_{\phi}$ by 4.54\%
on CIFAR-10 and 4.67\% on ImageNet-10 relative
to disabled correction.
Relative to delayed observations, the reductions are
12.78\% and 17.44\%.
For both contrasts, the per-run 95\% paired-bootstrap
intervals for the error ratio lie below one in all
three runs on both datasets
(Appendix~\ref{app:paired_metrics}).
Current observations also reduce class-conditional mean
error relative to disabled correction in all ten classes
of each dataset after averaging the three runs.

The measured latency overhead over disabled correction
is 0.63\% on CIFAR-10 and 0.66\% on ImageNet-10
in the matched unfused implementation.
Thus, the benefit of fresh measurements is
a reproducible reduction in full-sampler approximation
error.
The delayed-observation intervention tests this fixed
observer's dependence on timely inputs.
It does not establish that a predictor fitted specifically
for delayed measurements would perform equally poorly.

\subsection{Dependence on Sampling Stage and Prediction Horizon}
\label{sec:stage_horizon}

We reuse the reference-trajectory diagnostics to examine where
observation correction is effective.
Figure~\ref{fig:stage_horizon} evaluates eight anchor steps and
prediction horizons of one, two, and four steps.
Only positions that are partial evaluations under the $s=4$
schedule are included.

Correction reduces local deep-feature error at all 23 evaluated
positions on CIFAR-10, but at only 7 of 23 positions on ImageNet-10.
At the earliest ImageNet-10 anchor, the reductions range from
13.6\% to 19.1\%, while most middle and late positions show small
increases in error.
These results reveal stage-dependent benefits of correction.
They describe prediction on reference trajectories and do not
include feedback through the evolving accelerated image state.

\begin{figure}[t]
    \centering
    \includegraphics[width=\linewidth]
        {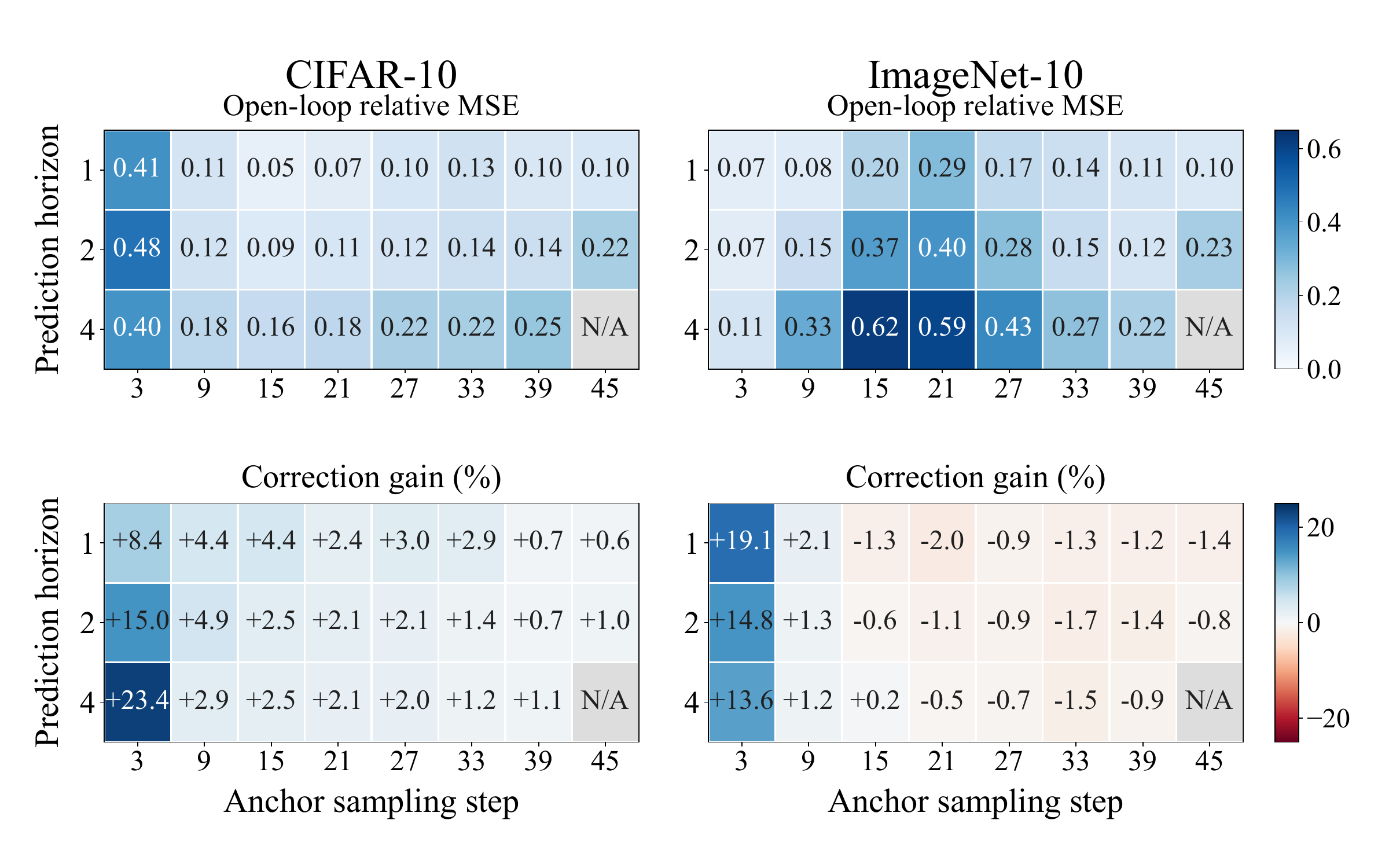}
    \caption{
    \textbf{Stage-dependent prediction and correction.}
    Columns show CIFAR-10 and ImageNet-10.
    Top: open-loop deep-feature squared error normalized by
    feature-reuse error at the same position.
    Bottom: correction gain,
    $100(1-E_{\mathrm{obs}}/E_{\mathrm{open}})$.
    Positive values indicate improvement.
    Each cell aggregates 60 CIFAR-10 or 40 ImageNet-10
    development trajectories.
    Anchor steps are zero based and follow sampling order
    from high to low noise.
    Gray cells target full evaluations and are excluded.
    }
    \label{fig:stage_horizon}
\end{figure}

\subsection{The Role of Temporal Dynamics}
\label{sec:dynamics_ablation}

Additional controls distinguish temporal modeling from
static shallow-to-deep regression and assess the effects
of state dimension and cross-channel coupling
(Table~\ref{tab:dynamics_controls}).
On the same $s=4$ schedule, static regression gives
$E_{\phi}=0.012275$ on CIFAR-10 and 0.034979
on ImageNet-10, compared with 0.002713 and 0.017323
for the rank-256 observer.

Increasing the observer rank from 64 to 256 reduces
paired feature error by 10.7\% and 11.0\%.
At rank 64, dense dynamics improve on diagonal dynamics
by 12.6\% on CIFAR-10 but only 0.5\% on ImageNet-10.
These configuration comparisons support temporal state
modeling in the evaluated predictor family, while showing
that the benefit of cross-channel coupling is
dataset dependent.

The earlier full-trajectory diagnostics show the same
aggregate ordering of linear extrapolation, fitted
open-loop dynamics, and observation-corrected dynamics
(Appendix~\ref{app:feature_prediction}).
The final-image evaluations additionally account for
changes to subsequent sampling states caused by earlier
approximations.
Batch-size timing and the limits of the available
calibration-cost records are reported in
Appendix~\ref{app:cost}.

\section{Conclusion}
\label{sec:conclusion}

We have formulated the acceleration of a frozen diffusion model as a
state estimation problem: the deep features skipped at a partial
evaluation are an unobserved state, the shallow features computed in
any case are measurements, and a time-dependent affine Koopman
approximation of the increments supplies the dynamics. The resulting
observer corrects its prediction by the innovation of the shallow
measurement, and its error obeys the recursion of a Luenberger
observer in which the identified coupling block plays the role of the
output matrix. In the experiments the correction reduces the error
relative to the full sampler beyond what coupled open-loop prediction
achieves, at speedups of about $1.5$--$1.6\times$ on two
class-conditional U-Nets.

The scope of these results is limited in three respects. The guarantee
of Proposition~\ref{prop:onestep} concerns one step and the
calibration distribution; no contraction of the closed loop is
established, and the growth of the error over an episode is governed
by Eqs.~\eqref{eq:deep_error_recursion}
and~\eqref{eq:deep_feature_accumulation}. The fitted predictor is tied
to the checkpoint, the sampling grid and the conditioning, and its
transfer to another setting requires a check of the prediction
accuracy and possibly a recalibration. Finally, the evaluation covers
two U-Net checkpoints at low resolution; diffusion
transformers, and text-to-image and video models remain to be
examined. A natural
refinement is to fit the gains on the residuals of the closed loop
rather than on the calibration residuals, so that
Proposition~\ref{prop:onestep} applies at every step of an episode.




\subsection*{Acknowledgement}
Y.X. acknowledge support from JST CREST Grant No.~JPMJCR24Q1, including the AIP Challenge Program.

\acknowledgments{Y.X. acknowledge support from JST CREST Grant No.~JPMJCR24Q1, including the AIP Challenge Program.}

\bibliographystyle{plainnat}
\bibliography{references}

@article{bai2026hierarchical,
  title={Hierarchical Koopman diffusion: Fast generation with interpretable diffusion trajectory},
  author={Bai, Hanru and Ding, Weiyang and Zou, Difan},
  journal={Advances in Neural Information Processing Systems},
  volume={38},
  pages={87345--87379},
  year={2026}
}

@inproceedings{bai2025konode,
  title={KoNODE: Koopman-Driven Neural Ordinary Differential Equations with Evolving Parameters for Time Series Analysis.},
  author={Bai, Hanru and Ding, Weiyang}
}

@inproceedings{ho2020denoising,
  title={Denoising diffusion probabilistic models},
  author={Ho, Jonathan and Jain, Ajay and Abbeel, Pieter},
  booktitle={Advances in Neural Information Processing Systems},
  volume={33},
  pages={6840--6851},
  year={2020}
}

@inproceedings{song2020denoising,
  title={Denoising diffusion implicit models},
  author={Song, Jiaming and Meng, Chenlin and Ermon, Stefano},
  booktitle={International Conference on Learning Representations},
  year={2021}
}

@inproceedings{lu2022dpm,
  title={{DPM}-{S}olver: A fast {ODE} solver for diffusion probabilistic model sampling in around 10 steps},
  author={Lu, Cheng and Zhou, Yuhao and Bao, Fan and Chen, Jianfei and Li, Chongxuan and Zhu, Jun},
  booktitle={Advances in Neural Information Processing Systems},
  volume={35},
  pages={5775--5787},
  year={2022}
}

@article{lu2025dpmpp,
  title={{DPM}-{S}olver++: Fast solver for guided sampling of diffusion probabilistic models},
  author={Lu, Cheng and Zhou, Yuhao and Bao, Fan and Chen, Jianfei and Li, Chongxuan and Zhu, Jun},
  journal={Machine Intelligence Research},
  volume={22},
  number={4},
  pages={730--751},
  year={2025},
  doi={10.1007/s11633-025-1562-4}
}

@inproceedings{ma2024deepcache,
  title={{DeepCache}: Accelerating diffusion models for free},
  author={Ma, Xinyin and Fang, Gongfan and Wang, Xinchao},
  booktitle={Proceedings of the IEEE/CVF Conference on Computer Vision and Pattern Recognition (CVPR)},
  pages={15762--15772},
  year={2024}
}

@inproceedings{liu2025reusing,
  title={From reusing to forecasting: Accelerating diffusion models with {TaylorSeers}},
  author={Liu, Jiacheng and Zou, Chang and Lyu, Yuanhuiyi and Chen, Junjie and Zhang, Linfeng},
  booktitle={Proceedings of the IEEE/CVF International Conference on Computer Vision (ICCV)},
  pages={15853--15863},
  year={2025}
}

@article{liu2025fastcache,
  title={{FastCache}: Fast caching for diffusion transformer through learnable linear approximation},
  author={Liu, Dong and Yu, Yanxuan and Zhang, Jiayi and Li, Yifan and Lengerich, Ben and Wu, Ying Nian},
  journal={arXiv preprint arXiv:2505.20353},
  year={2025}
}

@inproceedings{huang2024harmonica,
  title={{HarmoniCa}: Harmonizing training and inference for better feature caching in diffusion transformer acceleration},
  author={Huang, Yushi and Wang, Zining and Gong, Ruihao and Liu, Jing and Zhang, Xinjie and Guo, Jinyang and Liu, Xianglong and Zhang, Jun},
  booktitle={Proceedings of the 42nd International Conference on Machine Learning},
  series={Proceedings of Machine Learning Research},
  volume={267},
  pages={25835--25858},
  year={2025},
  publisher={PMLR}
}

@article{li2026disagree,
  title={Disagree to accelerate: Closing the loop on diffusion feature forecasts},
  author={Li, Yanchao and Xie, Jiaqing and Gao, Ben and Liu, Wanhao and Wang, Yanbo and Tsui, TY and Liu, Jinfei and Li, Yuqiang and Fu, Tianfan},
  journal={arXiv preprint arXiv:2608.01740},
  year={2026}
}

@article{williams2015data,
  title={A data--driven approximation of the {K}oopman operator: Extending dynamic mode decomposition},
  author={Williams, Matthew O. and Kevrekidis, Ioannis G. and Rowley, Clarence W.},
  journal={Journal of Nonlinear Science},
  volume={25},
  number={6},
  pages={1307--1346},
  year={2015}
}

@article{brunton2021modern,
  title={Modern {K}oopman theory for dynamical systems},
  author={Brunton, Steven L. and Budi{\v{s}}i{\'c}, Marko and Kaiser, Eurika and Kutz, J. Nathan},
  journal={SIAM Review},
  volume={64},
  number={2},
  pages={229--340},
  year={2022}
}

@article{luenberger1966observers,
  title={Observers for multivariable systems},
  author={Luenberger, David G.},
  journal={IEEE Transactions on Automatic Control},
  volume={11},
  number={2},
  pages={190--197},
  year={1966}
}

@article{shen2026beyond,
  title={Beyond fixed formulas: Data-driven linear predictor for efficient diffusion models},
  author={Shen, Zhirong and Huang, Rui and Liu, Jiacheng and Zou, Chang and Cai, Peiliang and Zheng, Shikang and Shi, Zhengyi and Feng, Liang and Zhang, Linfeng},
  journal={arXiv preprint arXiv:2604.26365},
  year={2026}
}

@article{han2026adaptive,
  title={Adaptive spectral feature forecasting for diffusion sampling acceleration},
  author={Han, Jiaqi and Shi, Juntong and Li, Puheng and Ye, Haotian and Guo, Qiushan and Ermon, Stefano},
  journal={arXiv preprint arXiv:2603.01623},
  year={2026}
}

@article{chen2026worlddyncache,
  title={{WorldDynCache}: Risk-controlled latent dynamics approximation for diffusion world model},
  author={Chen, Leyang and Wu, Junyi and Zhang, Shaoqiu and Zhang, Yulun},
  journal={arXiv preprint arXiv:2608.01845},
  year={2026}
}

@article{10.1063/5.0283640,
    author = {Xu, Yuanchao and Shao, Kaidi and Ishikawa, Isao and Hashimoto, Yuka and Logothetis, Nikos and Shen, Zhongwei},
    title = {A data-driven framework for Koopman semigroup estimation in stochastic dynamical systems},
    journal = {Chaos: An Interdisciplinary Journal of Nonlinear Science},
    volume = {35},
    number = {10},
    pages = {103123},
    year = {2025},
    month = {10},
    issn = {1054-1500},
    doi = {10.1063/5.0283640},
    url = {https://doi.org/10.1063/5.0283640},
    eprint = {https://pubs.aip.org/aip/cha/article-pdf/doi/10.1063/5.0283640/20757477/103123_1_5.0283640.pdf},
}

\clearpage
\appendix
\thispagestyle{empty}
\onecolumn

\section{Proof of Proposition~\ref{prop:onestep}}
\label{app:proof}

\begin{proof}
For a matrix $L'$ let
$F(L')=\mathbb{E}\|\epsilon^u-L'\epsilon^y\|^2
=\operatorname{tr}\Sigma_{uu}-2\operatorname{tr}(L'\Sigma_{yu})
+\operatorname{tr}(L'\Sigma_{yy}L'^{\top})$.
With $L=\Sigma_{uy}M$ and $M=(\Sigma_{yy}+\gamma I)^{-1}$,
\begin{equation*}
    F(0)-F(L)
    =
    \operatorname{tr}\bigl[\Sigma_{uy}(2M-M\Sigma_{yy}M)\Sigma_{yu}\bigr],
\end{equation*}
\begin{equation*}
    2M-M\Sigma_{yy}M
    =
    M(\Sigma_{yy}+2\gamma I)M,
\end{equation*}
which gives the stated expression. Since $M$ is symmetric and
$\Sigma_{yy}+2\gamma I$ is positive semidefinite,
$F(0)-F(L)=\operatorname{tr}\bigl[(M\Sigma_{yu})^{\top}(\Sigma_{yy}+2\gamma I)(M\Sigma_{yu})\bigr]\ge0$.
For $\gamma=0$, $F$ is a convex quadratic function of $L'$ with
gradient $-2\Sigma_{uy}+2L'\Sigma_{yy}$, which vanishes exactly at
$L=\Sigma_{uy}\Sigma_{yy}^{-1}$; the second moment of
$\epsilon^u-L\epsilon^y$ is
$\Sigma_{uu}-L\Sigma_{yu}-\Sigma_{uy}L^{\top}+L\Sigma_{yy}L^{\top}
=\Sigma_{uu}-\Sigma_{uy}\Sigma_{yy}^{-1}\Sigma_{yu}$.
\end{proof}

Eq.~\eqref{eq:observer_error_identity} follows from
$v_k-\bar v_k=A_{k-1}e_{k-1}+\rho_k$ and
$\hat v_k-\bar v_k=L_kH(v_k-\bar v_k)$;
Eq.~\eqref{eq:deep_error_recursion} is its first block row, with
$e_{k-1}=(e^u_{k-1},0)$;
and Eq.~\eqref{eq:deep_feature_accumulation} follows by summing
$(d_i-\hat d_i)-(d_{i-1}-\hat d_{i-1})
=\sigma_dP_de^u_i+(I-P_dP_d^\top)(d_i-d_{i-1})$
over $i=a+1,\dots,k$.

\section{Additional Experimental Details}
\label{app:experiments}

\subsection{Datasets and Reference Distributions}
\label{app:data}

The CIFAR-10 real-image reference contains the 50,000
training images.
The ImageNet-10 reference contains 13,000 ImageNet
training images, with 1,300 images in each checkpoint
class: airliner, bee, black swan, cup, dingo,
Egyptian cat, electric locomotive, ferret, goldfish,
and panda.
Images are converted to RGB, resized to a shorter side
of 74 pixels with bilinear interpolation, and
center-cropped to $64\times64$.
This is a ten-class subset, not ImageNet-1k evaluation.

Generated floating-point images in $[-1,1]$ are
quantized using
\begin{equation}
    Q(x)
    =
    \operatorname{uint8}
    \left(
        \operatorname{clip}
        \left(
            127.5(x+1)+0.5,\,
            0,\,
            255
        \right)
    \right).
    \label{eq:image_quantization}
\end{equation}

FID, KID, and paired feature errors use the same
2048-dimensional Inception-v3-compatible features
from \texttt{torch-fidelity==0.4.0}.
For each configuration and sampling run, FID is computed
after merging all 10,000 generated feature vectors.
FID scores from separate shards are never averaged.
The reported mean and sample SD are then computed
across the three complete sampling runs.

\subsection{Offline Identification and Data Splits}
\label{app:calibration}

Calibration uses previously collected full-sampling
feature trajectories from the frozen checkpoints (Table \ref{tab:calibration_splits}).
Splits are defined by complete initial-noise trajectory.
All sampled spatial locations and both guidance
branches of a trajectory remain in the same split.
The collector retains 16 fixed spatial locations
per trajectory.

Projection bases, normalization scales, dynamics,
and gains are estimated from the calibration split.
Predictor hyperparameters are selected using the
validation split.
The separate diagnostic split supports development
analyses.
The new 10,000-image sampling runs are separate
from these calibration trajectories.

\begin{table*}[t]
    \centering
    \small
    \caption{
        Complete-trajectory splits for offline identification.
        The diagnostic split is separate from the formal
        generation evaluation.
        Rank is the dimension of each projected increment.
    }
    \label{tab:calibration_splits}
    \begin{tabular}{lrrrr}
        \toprule
        Dataset
        & Calibration
        & Validation
        & Diagnostic
        & Rank $r$ \\
        \midrule
        CIFAR-10
        & 120 & 60 & 60 & 256 \\
        ImageNet-10
        & 80 & 40 & 40 & 256 \\
        \bottomrule
    \end{tabular}
\end{table*}

The main observer has rank 256 for both deep and shallow
increments, giving a joint state dimension of 512.
The selected dynamics ridge parameter is 1.0 in the
implementation's normalized coordinates on both datasets.
The static rank-256 regression uses ridge 0.03.
All identified parameters remain fixed during evaluation.

The calibrated operator models network-feature increments.
It is not identified with the checkpoint's native
bottleneck operator.
No exact finite-dimensional Koopman-invariance claim
is required.
Offline predictor fitting is distinct from training
the denoiser.

\subsection{Refresh Schedules and Baseline Implementations}
\label{app:refresh}

The deterministic sampling grid is
\begin{equation}
    t_k
    =
    \left\lfloor
        999\left(1-\frac{k}{49}\right)
    \right\rfloor,
    \qquad
    k=0,\ldots,49,
\end{equation}
as implemented by integer conversion of a
50-point linear grid.

With maximum partial run length $s$, a step $k$
is fully evaluated when $k<4$, $k\geq48$, or
$(k-2)\bmod(s+2)<2$.
The resulting full/partial counts are 28/22, 23/27,
and 20/30 for $s=2,3,4$.
Each step includes a conditional and an unconditional
evaluation.
These counts are guidance pairs, not single-branch
network-call counts.

Consecutive full evaluations reinitialize feature
increments on the current accelerated trajectory.
They do not reset the image state to a separately
generated reference.

\begin{table*}[t]
    \centering
    \footnotesize
    \setlength{\tabcolsep}{3pt}
    \caption{
        Observer results across the three refresh budgets.
        FID and KID report mean and sample SD across
        three sampling runs.
        Each run generates 10,000 images.
        Speedup is relative to full DDIM-50 at the
        main batch size.
        $E_{\phi}$ reports mean paired feature error.
    }
    \label{tab:observer_budgets}
    \begin{tabular}{lrrrrr}
        \toprule
        Dataset
        & $s$
        & FID$\downarrow$
        & $10^3\mathrm{KID}\downarrow$
        & $10^3E_{\phi}\downarrow$
        & Speedup$\uparrow$ \\
        \midrule
        CIFAR-10
        & 2
        & $13.495\pm0.050$
        & $6.319\pm0.133$
        & 0.743
        & $1.527\times$ \\

        CIFAR-10
        & 3
        & $13.515\pm0.079$
        & $6.224\pm0.098$
        & 1.776
        & $1.733\times$ \\

        CIFAR-10
        & 4
        & $13.669\pm0.071$
        & $6.393\pm0.115$
        & 2.713
        & $1.890\times$ \\

        \midrule
        ImageNet-10
        & 2
        & $13.914\pm0.079$
        & $2.665\pm0.024$
        & 4.662
        & $1.504\times$ \\

        ImageNet-10
        & 3
        & $14.133\pm0.097$
        & $2.760\pm0.032$
        & 9.741
        & $1.701\times$ \\

        ImageNet-10
        & 4
        & $14.257\pm0.074$
        & $2.802\pm0.029$
        & 17.323
        & $1.845\times$ \\
        \bottomrule
    \end{tabular}
\end{table*}

Reduced-step DDIM and DPM-Solver++ use candidate
step counts 20, 24, 26, 28, 30, 32, 34, 36, 38,
and 40.
DPM-Solver++ uses
\texttt{DPMSolverMultistepScheduler} from
\texttt{diffusers==0.38.0}, the checkpoint's beta
schedule, epsilon prediction, solver order 2,
and linear timestep spacing.
Thresholding with maximum sample value 1.0 gives
fixed clipping of the predicted clean image
to $[-1,1]$.
Karras sigma spacing is not used.
These settings define the tested solver configuration
and do not constitute an exhaustive solver search.

Feature reuse, linear extrapolation, and channelwise
affine prediction use the outer decoder feature
location.
Their timing candidates include the $s=4$ schedule
and schedules with 2, 4, 6, 8, 10, 12, or 16
additional full steps.
Additional refreshes prioritize consecutive pairs
to recover exact feature increments.
Equal full-step counts do not imply identical
refresh positions.

DeepCache is adapted to two stage boundaries of the
native U-Net, with uniform refresh intervals of
2, 3, 4, or 6.
It reuses high-level features and recomputes current
shallow decoder skips.
Spectrum uses the saved official predictor code with
previously validation-selected parameters and our
feature location and $s\in\{2,3,4\}$ schedules.
These entries evaluate adaptations on the present
checkpoints, not reproductions of the original
papers' benchmarks.

\paragraph{Latency matching.}
For each dataset and observer budget, we select the
closest-latency configuration within each baseline
family using timing alone.
The middle budget selects DDIM-28 and DPM-Solver++-28
on CIFAR-10, and DDIM-30 and DPM-Solver++-30 on
ImageNet-10.
All seven baseline families are within 3.81\%
of the observer latency at this budget.

At $s=4$, the closest DeepCache candidate is
11.10\% faster on CIFAR-10 and 11.03\% faster
on ImageNet-10.
The closest Spectrum candidate is 11.63\% and
12.60\% slower, respectively.
These four comparisons fall outside the 5\%
tolerance and are not described as latency matched.
Their results remain plotted at their actual
measured latencies.

\subsection{Observation Interventions and Predictor Controls}
\label{app:controls}

The current, delayed, and disabled correction variants
load the same rank-256 base artifact.
They share projection bases, normalization, operators,
gains, and full refresh positions.
All use the unfused implementation.

The delayed variant replaces the correction's current
shallow increment with the preceding sampling step's
measured increment.
Fresh shallow activations are still used by the decoder,
and actual shallow measurements continue to be recorded.
Disabling correction suppresses the joint-state
innovation update, including both its deep and shallow
components.
It does not change only the deep gain while retaining
shallow-state assimilation.

The main rank-256 observer uses an algebraically fused
implementation.
Its full-cohort feature MSE is 0.002712826 on CIFAR-10
and 0.017322962 on ImageNet-10.
The corresponding unfused current-input variant gives
0.002712838 and 0.017323433.
These numerical differences are retained.
Fused and unfused timings are not interchanged in
the strict ablation.

Artifact inspection confirms that the stored open-loop
and base observer files have identical projection bases,
operators, and gains on both datasets.
Their correction flags differ.
The open-loop control therefore reproduces the
disabled-correction outputs in this evaluation.
It is not an independently fitted source of evidence
for the value of correction.

\begin{table*}[t]
    \centering
    \footnotesize
    \setlength{\tabcolsep}{3pt}
    \caption{
        Predictor controls on the same $s=4$ refresh schedule.
        Feature errors report mean and sample SD across
        three sampling runs.
        The rank-256 observer uses fused inference.
        Other configurations can differ in rank, fitting,
        or implementation.
        The strict correction intervention is reported
        in Table~\ref{tab:observation_ablation}.
    }
    \label{tab:dynamics_controls}
    \begin{tabular}{lrr rr}
        \toprule
        & \multicolumn{2}{c}{CIFAR-10}
        & \multicolumn{2}{c}{ImageNet-10} \\
        \cmidrule(lr){2-3}
        \cmidrule(lr){4-5}
        Predictor
        & $10^3E_{\phi}\downarrow$
        & Speedup
        & $10^3E_{\phi}\downarrow$
        & Speedup \\
        \midrule
        Feature reuse
        & $18.270\pm0.144$ & $1.93\times$
        & $50.543\pm0.169$ & $1.89\times$ \\

        Linear extrapolation
        & $4.395\pm0.058$ & $1.92\times$
        & $21.794\pm0.277$ & $1.88\times$ \\

        Channelwise affine
        & $3.388\pm0.067$ & $1.92\times$
        & $19.653\pm0.339$ & $1.88\times$ \\

        Static regression, $r=256$
        & $12.275\pm0.029$ & $1.89\times$
        & $34.979\pm0.361$ & $1.85\times$ \\

        Diagonal observer, $r=64$
        & $3.478\pm0.056$ & $1.90\times$
        & $19.560\pm0.239$ & $1.85\times$ \\

        Dense observer, $r=64$
        & $3.038\pm0.057$ & $1.90\times$
        & $19.460\pm0.198$ & $1.85\times$ \\

        Open-loop, $r=256$
        & $2.842\pm0.050$ & $1.87\times$
        & $18.172\pm0.230$ & $1.83\times$ \\

        Dense observer, $r=256$
        & $2.713\pm0.048$ & $1.89\times$
        & $17.323\pm0.235$ & $1.85\times$ \\
        \bottomrule
    \end{tabular}
\end{table*}

Static regression, rank-64 observation-corrected
dynamics, and rank-64 diagonal dynamics are additional
calibrated configurations.
Their comparisons characterize complete predictors
and may change fitted parameters or numerical
implementation.
In particular, the rank comparison also compares
rank-256 fused inference with rank-64 unfused inference.

We therefore use the same-fit observation interventions
for the most direct test of correction.
These controls do not establish that the observer
is more expressive than every direct affine predictor
with the same input information.

\subsection{Feature Prediction on Reference Trajectories}
\label{app:feature_prediction}

The earlier local diagnostic uses 60 CIFAR-10 and
40 ImageNet-10 development trajectories from full
sampling.
Two consecutive full evaluations initialize the
predictor.
Predictions are assessed at horizons of one, two,
and four steps from anchor indices
3, 9, 15, 21, 27, 33, 39, and 45.

Squared deep-feature errors are aggregated and
normalized by feature reuse at the same anchors
and horizons.
The feature-reuse reference is consequently one.

\begin{table*}[t]
    \centering
    \small
    \caption{
        Local deep-feature prediction on earlier
        development diagnostic trajectories.
        Errors are normalized by feature reuse and
        aggregated over horizons of one, two, and
        four steps.
    }
    \label{tab:feature_diagnostics}
    \begin{tabular}{lrr}
        \toprule
        Predictor
        & CIFAR-10
        & ImageNet-10 \\
        \midrule
        Feature reuse
        & 1.0000 & 1.0000 \\
        Linear extrapolation
        & 0.6022 & 0.6886 \\
        Static shallow-to-deep
        & 0.5964 & 0.5643 \\
        Open-loop dynamics
        & 0.3829 & 0.3907 \\
        Observation-corrected dynamics
        & 0.3664 & 0.3763 \\
        \bottomrule
    \end{tabular}
\end{table*}

Observation correction reduces aggregate diagnostic
error from 0.3829 to 0.3664 on CIFAR-10 and from
0.3907 to 0.3763 on ImageNet-10.
These measurements use shallow observations on full
reference trajectories.
The formal generation results evaluate this
accumulated effect separately.

\subsection{Paired Errors and Uncertainty}
\label{app:paired_metrics}

For accelerated output $\hat{x}_i$ and full DDIM-50
output $x_i^{\mathrm{full}}$ with the same initial
noise and class label, we define
\begin{equation}
    E_{\phi}
    =
    \frac{1}{ND}
    \sum_{i=1}^{N}
    \left\|
        \phi\!\left(Q(\hat{x}_i)\right)
        -
        \phi\!\left(Q(x_i^{\mathrm{full}})\right)
    \right\|_2^2,
    \; D=2048.
    \label{eq:paired_feature_error}
\end{equation}

Pixel MSE $E_x$ is computed directly between
floating-point outputs in $[-1,1]$ and averaged
over pixels, channels, and images.
It is not computed after uint8 quantization.
Full-sampler outputs are generated references,
not ground-truth images.

For each predefined comparison, we jointly resample
paired image indices 1,000 times within each run.
Each bootstrap statistic is the ratio of the two
resampled mean feature errors.
We report the 2.5th and 97.5th percentiles separately
for each run.

These intervals describe variation across images
conditional on the checkpoint and fitted predictor.
They are not FID uncertainty, model-training
uncertainty, or pooled three-run confidence intervals.

\begin{table*}[t]
    \centering
    \footnotesize
    \setlength{\tabcolsep}{3pt}
    \caption{
        Paired feature-error ratios for the strict
        observation interventions at $s=4$.
        Brackets give per-run 95\% paired-bootstrap
        intervals from 1,000 resamples.
        Ratios below one favor current observations.
        Intervals are not pooled across runs.
    }
    \label{tab:observation_bootstrap}
    \begin{tabular}{llrrr}
        \toprule
        Dataset
        & Comparison
        & Run 1
        & Run 2
        & Run 3 \\
        \midrule
        CIFAR-10
        & Current / disabled
        & $0.9529$ & $0.9591$ & $0.9518$ \\
        &
        & $[0.9463,0.9594]$
        & $[0.9521,0.9667]$
        & $[0.9414,0.9616]$ \\

        CIFAR-10
        & Current / delayed
        & $0.8622$ & $0.8774$ & $0.8770$ \\
        &
        & $[0.8500,0.8743]$
        & $[0.8657,0.8887]$
        & $[0.8646,0.8902]$ \\

        \midrule
        ImageNet-10
        & Current / disabled
        & $0.9541$ & $0.9541$ & $0.9516$ \\
        &
        & $[0.9481,0.9602]$
        & $[0.9478,0.9605]$
        & $[0.9456,0.9578]$ \\

        ImageNet-10
        & Current / delayed
        & $0.8260$ & $0.8332$ & $0.8174$ \\
        &
        & $[0.8132,0.8384]$
        & $[0.8203,0.8466]$
        & $[0.8058,0.8289]$ \\
        \bottomrule
    \end{tabular}
\end{table*}

KID is the average of 20 random subset estimates,
each using 500 real and 500 generated feature vectors.
We distinguish the standard deviation across these
subsets from the sample SD of the three run-level
KID means.
Only the latter is reported alongside mean KID
in the result tables.

\paragraph{Classwise analysis.}
After averaging the three runs, current observations
improve on disabled correction in every class of
both datasets.
Relative improvements range from 0.37\% to 9.07\%
on CIFAR-10 and from 0.77\% to 6.51\%
on ImageNet-10.

Relative to delayed correction, current observations
improve nine of ten ImageNet classes.
The airliner class is an exception, with a 5.20\%
increase in error.
Classwise averages are descriptive and do not imply
improvement for every image or every run within
a class.

\subsection{Batch-Size Sensitivity and Offline Cost}
\label{app:cost}

We independently repeat timing at batch sizes
10, 20, 50, and 100 on CIFAR-10, and
5, 10, 20, and 40 on ImageNet-10.
Each configuration is warmed up, and three
synchronized repetitions are measured in randomized
order at each batch size.
The full sampler is timed again to define each
speedup.

\begin{table*}[t]
    \centering
    \footnotesize
    \setlength{\tabcolsep}{4pt}
    \caption{
        Separate batch-size timing measurements for $s=4$.
        Latencies are seconds per batch, measured as the
        median of three synchronized repetitions after
        warmup.
        Each speedup uses the full sampler timed again
        at the corresponding batch size.
    }
    \label{tab:batch_sensitivity}
    \begin{tabular}{lrrrrrr}
        \toprule
        Dataset
        & Batch
        & Full
        & Reuse
        & Channel
        & Observer
        & Speedup \\
        \midrule
        CIFAR-10
        & 10 & 3.399 & 1.586 & 1.589 & 1.603
        & $2.121\times$ \\
        CIFAR-10
        & 20 & 4.166 & 2.119 & 2.124 & 2.149
        & $1.939\times$ \\
        CIFAR-10
        & 50 & 7.043 & 3.657 & 3.671 & 3.728
        & $1.889\times$ \\
        CIFAR-10
        & 100 & 12.507 & 6.565 & 6.594 & 6.706
        & $1.865\times$ \\

        \midrule
        ImageNet-10
        & 5 & 4.093 & 2.098 & 2.102 & 2.130
        & $1.922\times$ \\
        ImageNet-10
        & 10 & 5.959 & 3.069 & 3.082 & 3.131
        & $1.903\times$ \\
        ImageNet-10
        & 20 & 9.717 & 5.150 & 5.173 & 5.265
        & $1.846\times$ \\
        ImageNet-10
        & 40 & 17.315 & 9.413 & 9.459 & 9.637
        & $1.797\times$ \\
        \bottomrule
    \end{tabular}
\end{table*}

For $s=4$, observer speedups range from
$1.87\times$ to $2.12\times$ on CIFAR-10
and from $1.80\times$ to $1.92\times$
on ImageNet-10.
These are throughput-dependent timing results.
The formal quality evaluation uses only the
main batch sizes 50 and 20.

The completed GPU suite reuses previously calibrated
predictors and does not measure end-to-end offline
identification cost.
Earlier collector logs record trajectory collection
but do not provide a complete, hardware-matched
measurement including basis construction, operator
fitting, and gain fitting.

The serialized base observer occupies approximately
73.37 MiB per dataset.
The fused deployment artifact occupies approximately
36.05 MiB, excluding the denoiser.
Serialized file size is not peak accelerator
memory usage.

\subsection{Evaluation Records and Reproducibility}
\label{app:full_results}

The timing pool contains 66 configurations per dataset,
including the proposed method and ablations.
The expanded control set and timing-selected baselines
yield 34 distinct evaluated configurations per dataset.

Across two datasets, three sampling runs, and
10,000 images per run, the evaluation completes
2,040,000 configuration-level generations.
Initial noise is shared across methods, so this
count does not represent 2,040,000 independent
initial conditions.

All configurations are retained in the evaluation
records, including those that do not improve FID
or paired fidelity.
The records contain per-run FID and KID,
per-image paired errors, classwise summaries,
timing repetitions, configuration hashes,
and completion markers.
All 204 configuration-level runs contain
10,000 samples.

Recorded correctness checks compare the split
full-forward path with native DDIM and compare
each DeepCache full-forward branch with native
execution.
Maximum output differences in these checks
are zero on both datasets.
The current-observation intervention also matches
the existing unfused base observer exactly on
the check inputs.
These checks validate the tested execution paths.

\subsection{Qualitative Evaluation and Scope}
\label{app:scope}

For the formal evaluation, preview grids retain
the first 40 image indices of each run,
corresponding to four images per class.
The same indices are used across methods.
These previews are not selected according to
improvement.

The selected ImageNet teaser examples in the
introduction come from an earlier development
cohort and illustrate individual improvement
cases.
They are separate from the fixed-index
formal-evaluation previews and should not be
interpreted as representative random samples.

\end{document}